\documentclass[conference]{IEEEtran}

\usepackage{amsmath,amssymb,amsthm}
\usepackage{booktabs}
\usepackage{graphicx}
\usepackage{placeins}
\usepackage{cite}
\usepackage{url}
\usepackage{tikz}
\usetikzlibrary{positioning,arrows.meta,decorations.pathmorphing}

\newtheorem{theorem}{Theorem}

\newtheorem{proposition}[theorem]{Proposition}

\newtheorem{example}[theorem]{Example}

\newcommand{\R}{\mathbb{R}}
\newcommand{\E}{\mathbb{E}}
\newcommand{\Prob}{\mathbb{P}}

\newcommand{\FIM}{\mathcal{I}}
\newcommand{\Gram}{\mathbf{G}}
\DeclareMathOperator{\rank}{rank}

\DeclareMathOperator{\supp}{supp}

\DeclareMathOperator{\diag}{diag}

\title{Fisher Simplicity in Kolmogorov-Arnold Networks and Multilayer Perceptrons}

\author{
\IEEEauthorblockN{Ami Tavory}
\IEEEauthorblockA{Meta Platforms\\
Tel-Aviv, Israel\\
\texttt{atavory@meta.com}}
\and
\IEEEauthorblockN{Meir Feder}
\IEEEauthorblockA{School of Electrical and Computer Engineering\\
Tel-Aviv University\\
Tel-Aviv, Israel\\
\texttt{meir@tau.ac.il}}
}

\begin{document}
\maketitle

  \begin{abstract}
  Kolmogorov-Arnold Networks (KANs) are motivated in part by interpretability:
  their learned edge functions can be inspected, pruned, and reduced to symbolic
  structure.  In a fixed-basis KAN, this makes a small or zero basis coefficient
  look like a certificate of simplicity, much as a dead rectified linear unit
  (ReLU) marks unused computation in a multilayer perceptron (MLP).  Fisher
  nullity gives a precise statistical notion: a parameter direction is
  Fisher-simple exactly when perturbing it is invisible under the task
  distribution.  We study when these architectural and Fisher notions agree.

  A dead ReLU unit closes a path and makes its associated score directions
  vanish.  A zero KAN coefficient, like a zero MLP weight, need not close a path
  and therefore need not be Fisher-null.  In the single-layer Gaussian case,
  the coefficient Fisher matrix is a basis Gram matrix under the input
  distribution and is independent of the fitted coefficients.  In a multilayer
  KAN, Fisher simplicity is graph-path based:
  the data must reach a basis atom and its perturbation must propagate through
  the downstream network.  We encode these two conditions in an effective edge
  measure and, under local dictionary independence and effective-measure
  nondegeneracy, show that zero effective exposure exactly identifies
  Fisher-null directions within an edge.  Controlled synthetic diagnostics
  illustrate that zero coefficients can preserve rank while effective path
  disconnections remove the predicted directions.  These diagnostics concern
  local Fisher geometry, not end-to-end pruning quality; coefficient magnitude
  alone is not a Fisher-based pruning criterion.
  \end{abstract}

\section{Introduction}
\label{sec:intro}

Neural networks come with architecture-specific notions of simplicity.  In a
rectified linear unit (ReLU) multilayer perceptron (MLP), a hidden unit is called
dead when its activation vanishes on the task
distribution~\cite{hu2016trimming,lu2020dying}.  Kolmogorov-Arnold Networks
(KANs) place learned univariate functions on edges.  Their interpretability
motivation comes from making those functions inspectable, sparsifying and
pruning the graph, and extracting symbolic structure from the learned
model~\cite{liu2024kan,liu2025kan2}.  In a fixed-basis KAN, the apparent
coefficient-level analogue of a dead activation is a basis component whose
coefficient is small or zero.  These signals need not describe the same
statistical event: a dead activation closes a path, whereas a small or zero
parameter value, in either architecture, need not do so.

\bigskip

The Fisher information matrix (FIM) supplies a precise notion of simplicity.
A parameter direction is Fisher-simple precisely when perturbing it is
statistically invisible on the task distribution, or equivalently Fisher-null.
We ask when the usual
architectural signals above coincide with this exact loss of a local
statistical direction.

\bigskip

For a dead ReLU unit, the two notions agree.  The closed activation region
zeros both the unit and its local derivative on the task support, forcing all
associated score directions to vanish.

\smallskip

For fixed-basis KANs, the notions separate.  We study edge functions that are
linear combinations of B-splines on a fixed grid and, optionally, one sigmoid
linear unit (SiLU) atom~\cite{liu2024kan,deBoor1978}; any fixed spline scale is
absorbed into the coefficients.  Let $X\sim P_X$ be the task input,
$\phi(X)$ the fixed-basis feature vector, and $\FIM_\phi$ its coefficient
Fisher matrix.  With fixed Gaussian observation variance~$\sigma^2$, a
one-layer KAN has
\[
\FIM_\phi=\sigma^{-2}\E[\phi(X)\phi(X)^\top].
\]
The coefficient values are absent.  A zero coefficient therefore does not make
its direction Fisher-null.  In a multilayer KAN, Fisher simplicity is instead
graph-path based: a basis atom is Fisher-visible only when the task distribution
reaches its support and its perturbation propagates through the downstream
network.  Section~\ref{sec:fixed-fisher-geom} encodes these two requirements in
an edge-specific effective distribution.

\medskip

Figure~\ref{fig:fisher-null-events} summarizes how these two common simplicity
signals map to the same tangent-space criterion.

\newcommand{\FisherNullMLPPanel}[3]{%
  \begin{scope}[shift={(#1,#2)},scale=#3,every node/.append style={transform shape}]
    \node[paneltitle] at (0,4.15) {A. MLP dead unit};

    \node[graynode] (mlpIone) at (-1.10,0) {};
    \node[graynode] (mlpItwo) at ( 1.10,0) {};

    \node[graynode] (mlpAone)  at (-1.10,1.10) {};
    \node[deadnode] (mlpAdead) at ( 0.00,1.10) {};
    \node[graynode] (mlpAthree) at (1.10,1.10) {};

    \node[graynode] (mlpBone)   at (-0.95,2.35) {};
    \node[graynode] (mlpBtwo)   at ( 0.00,2.35) {};
    \node[graynode] (mlpBthree) at ( 0.95,2.35) {};

    \node[graynode] (mlpO) at (0,3.40) {};

    \draw[graywire] (mlpIone) -- (mlpAone);
    \draw[graywire] (mlpIone) -- (mlpAdead);
    \draw[graywire] (mlpIone) -- (mlpAthree);
    \draw[graywire] (mlpItwo) -- (mlpAone);
    \draw[graywire] (mlpItwo) -- (mlpAdead);
    \draw[graywire] (mlpItwo) -- (mlpAthree);

    \draw[graywire] (mlpAone) -- (mlpBone);
    \draw[graywire] (mlpAone) -- (mlpBtwo);
    \draw[graywire] (mlpAone) -- (mlpBthree);
    \draw[graywire] (mlpAdead) -- (mlpBone);
    \draw[graywire] (mlpAdead) -- (mlpBtwo);
    \draw[graywire] (mlpAdead) -- (mlpBthree);
    \draw[graywire] (mlpAthree) -- (mlpBone);
    \draw[graywire] (mlpAthree) -- (mlpBtwo);
    \draw[graywire] (mlpAthree) -- (mlpBthree);

    \draw[graywire] (mlpBone) -- (mlpO);
    \draw[graywire] (mlpBtwo) -- (mlpO);
    \draw[graywire] (mlpBthree) -- (mlpO);

    \draw[black,line width=.7pt] (-.09,1.01) -- (.09,1.19);
    \draw[black,line width=.7pt] (-.09,1.19) -- (.09,1.01);

    \node[panelnote] at (0,-1.06) {dead hidden unit\\score columns zero};
  \end{scope}%
}

\newcommand{\FisherNullKANPanel}[8]{%
  \begin{scope}[shift={(#1,#2)},scale=#3,every node/.append style={transform shape}]
    \node[paneltitle] at (0,4.15) {#6};

    \node[graynode] (#4Ione) at (-1.05,0) {};
    \node[graynode] (#4Itwo) at ( 1.05,0) {};

    \node[graynode] (#4Hone)   at (-1.50,2.05) {};
    \node[graynode] (#4Htwo)   at (-0.50,2.05) {};
    \node[graynode] (#4Hthree) at ( 0.50,2.05) {};
    \node[graynode] (#4Hfour)  at ( 1.50,2.05) {};

    \node[graynode] (#4O) at (0,3.35) {};

    \node[graybox]  (#4Bone)   at (-1.75,1.02) {};
    \node[#5]       (#4Bhot)   at (-1.25,1.02) {};
    \node[grayboxc] (#4Bthree) at (-0.75,1.02) {};
    \node[grayboxb] (#4Bfour)  at (-0.25,1.02) {};
    \node[graybox]  (#4Bfive)  at ( 0.25,1.02) {};
    \node[grayboxb] (#4Bsix)   at ( 0.75,1.02) {};
    \node[grayboxc] (#4Bseven) at ( 1.25,1.02) {};
    \node[graybox]  (#4Beight) at ( 1.75,1.02) {};

    \node[graybox]  (#4Tone)   at (-1.50,2.70) {};
    \node[grayboxb] (#4Ttwo)   at (-0.50,2.70) {};
    \node[grayboxc] (#4Tthree) at ( 0.50,2.70) {};
    \node[graybox]  (#4Tfour)  at ( 1.50,2.70) {};

    \draw[graywire] (#4Ione) -- (#4Bone.south);    \draw[graywire] (#4Bone.north) -- (#4Hone);
    \draw[#8]       (#4Ione) -- (#4Bhot.south);    \draw[#8]       (#4Bhot.north) -- (#4Htwo);
    \draw[graywire] (#4Ione) -- (#4Bthree.south);  \draw[graywire] (#4Bthree.north) -- (#4Hthree);
    \draw[graywire] (#4Ione) -- (#4Bfour.south);   \draw[graywire] (#4Bfour.north) -- (#4Hfour);

    \draw[graywire] (#4Itwo) -- (#4Bfive.south);   \draw[graywire] (#4Bfive.north) -- (#4Hone);
    \draw[graywire] (#4Itwo) -- (#4Bsix.south);    \draw[graywire] (#4Bsix.north) -- (#4Htwo);
    \draw[graywire] (#4Itwo) -- (#4Bseven.south);  \draw[graywire] (#4Bseven.north) -- (#4Hthree);
    \draw[graywire] (#4Itwo) -- (#4Beight.south);  \draw[graywire] (#4Beight.north) -- (#4Hfour);

    \draw[graywire] (#4Hone) -- (#4Tone.south);    \draw[graywire] (#4Tone.north) -- (#4O);
    \draw[#8]       (#4Htwo) -- (#4Ttwo.south);    \draw[#8]       (#4Ttwo.north) -- (#4O);
    \draw[graywire] (#4Hthree) -- (#4Tthree.south);\draw[graywire] (#4Tthree.north) -- (#4O);
    \draw[graywire] (#4Hfour) -- (#4Tfour.south);  \draw[graywire] (#4Tfour.north) -- (#4O);

    \node[panelnote] at (0,-1.06) {#7};
  \end{scope}%
}

\newcommand{\FisherNullKANAliveWitness}[3]{%
  \begin{scope}[shift={(#1,#2)},scale=#3,every node/.append style={transform shape}]
    \draw[black!45,line width=.45pt] (-1.12,0) -- (1.12,0);
    \draw[black,line width=.55pt] (-.38,-.07) -- (-.38,.07);
    \draw[black,line width=.55pt] ( .38,-.07) -- ( .38,.07);
    \draw[black!55,line width=.55pt,densely dashed]
      (-.38,.02) .. controls (-.22,.43) and (.22,.43) .. (.38,.02);
    \foreach \x in {-.30,-.17,-.04,.11,.26} {
      \draw[black,line width=.55pt] (\x,-.12) -- (\x,.12);
    }
    \draw[black,line width=1.15pt,line cap=round] (-.38,-.20) -- (.38,-.20);
    \node[witnesslabel] at (0,.56) {$b_i$ exposed};
    \node[witnesslabel] at (0,-.42) {$W_e b_i\neq0,\ \rho_i^{(e)}>0$};
  \end{scope}%
}

\newcommand{\FisherNullKANDeadWitness}[3]{%
  \begin{scope}[shift={(#1,#2)},scale=#3,every node/.append style={transform shape}]
    \draw[black!45,line width=.45pt] (-1.12,0) -- (1.12,0);
    \draw[black,line width=.55pt] (-.38,-.07) -- (-.38,.07);
    \draw[black,line width=.55pt] ( .38,-.07) -- ( .38,.07);
    \draw[black!55,line width=.55pt,densely dashed]
      (-.38,.02) .. controls (-.22,.43) and (.22,.43) .. (.38,.02);
    \foreach \x in {-.96,-.78,-.61,.62,.80,.98} {
      \draw[black!35,line width=.55pt] (\x,-.12) -- (\x,.12);
    }
    \draw[black!22,line width=1.15pt,line cap=round] (-.38,-.20) -- (.38,-.20);
    \draw[black,line width=.70pt,line cap=round] (-.48,.36) -- (.48,-.24);
    \draw[black,line width=.70pt,line cap=round] (-.48,-.24) -- (.48,.36);
    \node[witnesslabel] at (0,.56) {$b_i$ unexposed};
    \node[witnesslabel] at (0,-.42) {$W_e b_i=0\ {\rm a.s.},\ \rho_i^{(e)}=0$};
  \end{scope}%
}

\begin{figure}[t]
\centering
\resizebox{\columnwidth}{!}{%
\begin{tikzpicture}[
  graynode/.style={circle,fill=gray!48,inner sep=2.25pt},
  graywire/.style={draw=gray!45,line width=.65pt},
  deadwire/.style={draw=black,line width=.95pt},
  hotwire/.style={draw=black,line width=1.05pt},
  deadnode/.style={circle,draw=black,very thick,fill=white,inner sep=2.45pt},
  graybox/.style={
    draw=gray!48,
    line width=.62pt,
    fill=white,
    minimum size=.42cm,
    inner sep=0pt,
    path picture={
      \draw[gray!48,line width=.55pt]
        (-.15,-.08) .. controls (-.07,.12) and (.07,.12) .. (.15,-.08);
    }
  },
  grayboxb/.style={
    draw=gray!48,
    line width=.62pt,
    fill=white,
    minimum size=.42cm,
    inner sep=0pt,
    path picture={
      \draw[gray!48,line width=.55pt]
        (-.15,.08) .. controls (-.07,-.12) and (.07,-.12) .. (.15,.08);
    }
  },
  grayboxc/.style={
    draw=gray!48,
    line width=.62pt,
    fill=white,
    minimum size=.42cm,
    inner sep=0pt,
    path picture={
      \draw[gray!48,line width=.55pt] (-.15,-.09) -- (.15,.09);
    }
  },
  flatzerobox/.style={
    draw=black,
    very thick,
    fill=white,
    minimum size=.50cm,
    inner sep=0pt,
    path picture={
      \draw[black,line width=.78pt]
        (-.21,.02) -- (-.10,.02)
        .. controls (-.06,-.05) and (.06,-.05) .. (.10,.02)
        -- (.21,.02);
      \draw[black!55,line width=.58pt,densely dashed]
        (-.085,-.09) .. controls (-.04,.15) and (.04,.15) .. (.085,-.09);
    }
  },
  paneltitle/.style={font=\small\bfseries,align=center},
  panelnote/.style={font=\scriptsize,align=center,text=black},
  witnesslabel/.style={font=\scriptsize,align=center,text=black}
]
  \FisherNullMLPPanel{-4.20}{0}{.88}
  \FisherNullKANPanel{0}{0}{.88}{kanA}{flatzerobox}{B. zero coefficient, exposed}{edge reached\\path transmits}{hotwire}
  \FisherNullKANPanel{4.20}{0}{.88}{kanB}{flatzerobox}{C. zero coefficient, unexposed}{no effective path\\score column zero}{graywire}
\end{tikzpicture}}
\caption{MLPs and fixed-basis KANs turn visible simplicity into Fisher geometry
in different ways.  In a multilayer MLP, a dead ReLU hidden unit closes the
gate on the task support: both the activation and its local derivative vanish,
so the unit's score columns are zero (panel A).  In a KAN, Fisher visibility is
graph-path based and has two factors: the data must reach a basis atom's support
(upstream coverage), and the edge must transmit to the output (downstream
transmission).  Together these define the effective exposure introduced in
Section~\ref{sec:fixed-fisher-geom}.  Accordingly, only panel B highlights an
active input-to-output tangent path.  A zero coefficient alone does not certify
that this path is cut: if the atom is still reached and transmitting, an
infinitesimal coefficient change remains visible to the Fisher matrix (panel
B).  If either coverage or transmission is cut, the score column vanishes and
gives an exact Fisher null (panel C).}
\label{fig:fisher-null-events}
\end{figure}

\bigskip

\paragraph*{Contributions}
\begin{enumerate}
\item We separate parameter magnitude from path closure: a dead ReLU activation
makes its associated parameter block Fisher-null, while a zero fixed-basis KAN
coefficient need not close its path or null its own score direction.
\item We derive the Fisher Gram representation for fixed-basis KANs and show
that their Fisher simplicity is graph-path based.  Under local dictionary
independence and effective-measure nondegeneracy, zero effective exposure
exactly characterizes edge-block Fisher nulls.
\item We illustrate the predicted local Fisher behavior in controlled synthetic
edge diagnostics and derive scoped implications for pruning, without claiming
end-to-end pruning gains.
\end{enumerate}

\bigskip

\paragraph*{Outline}
We derive the fixed-basis KAN Fisher identities in
Section~\ref{sec:fixed-fisher-geom}: the Gaussian Gram identity in the
single-layer case, and the edge-specific effective distribution in the
multilayer case.  We then compare Fisher-null mechanisms in KANs and ReLU MLPs
in Section~\ref{sec:exact-nulls}.  In Section~\ref{sec:pruning}, we apply the
exposure criterion to KAN pruning, test whether coefficient magnitude predicts
KAN-edge FIM effect, and define the retained-dictionary Fisher used by the
pruning scores.
We conclude in Section~\ref{sec:conclusion}.

\section{Fisher Geometry of Fixed-Basis KANs}
\label{sec:fixed-fisher-geom}

We express the Fisher geometry of a fixed-basis KAN in terms of task-weighted
flow through its computational graph, as illustrated in
Figure~\ref{fig:fisher-null-events}.  In a single layer, the input distribution
determines which basis atoms are active.  In multiple layers, the tangent
feature of a spline coefficient factors into upstream atom activation and
downstream transmission to the output.  We encode these two factors in an
effective edge distribution.  In Section~\ref{sec:exact-nulls}, we show that,
under local linear independence of the active dictionary functions and
effective-measure nondegeneracy on knot intervals, zero effective exposure
exactly characterizes the edge-block Fisher null directions.

\subsection{Single layer and predictor Jacobians}
\label{subsec:fixed-fisher-geom-single}

A one-layer fixed-basis KAN maps $\R^n \to \R$ via
\begin{equation}
\label{eq:kan}
f_\theta(x) = \sum_{p=1}^{n} \sum_{m=1}^{d} \theta_{p,m}\, b_m(x_p),
\end{equation}
where $\{b_m\}_{m=1}^d$ is the fixed dictionary, consisting of B-spline atoms
and, optionally, one SiLU atom, and $\theta_{p,m}$ are learnable coefficients
($D = nd$ total).  Note that the model is linear in~$\theta$, as
the feature map
$\phi(x)=(b_m(x_p))_{p=1,\ldots,n;\,m=1,\ldots,d}\in\R^D$
is independent of~$\theta$.

\bigskip

Let
$\eta_\theta:\mathcal X\to\R$ be a scalar predictor on the input space
$\mathcal X$ (here $\mathcal X=\R^n$), and let $q_\eta$ be a regular scalar
observation family with score
$s(y;\eta)=\partial_\eta\log q_\eta(y)$ and scalar Fisher information
\[
w(\eta)=\E_{Y\sim q_\eta}[s(Y;\eta)^2].
\]
Thus $w(\eta)$ is the per-observation information, or likelihood curvature, at
prediction value~$\eta$.  For the conditional model
$p_\theta(y\mid x)=q_{\eta_\theta(x)}(y)$, write
$J_\theta(x)=\nabla_\theta\eta_\theta(x)$.

\begin{proposition}[Fixed-basis Gaussian Gram reduction]
\label{prop:gram}
For the fixed-basis KAN coefficient model
$\eta_\theta(x)=\theta^\top\phi(x)$ under the fixed-variance Gaussian
observation model $q_\eta=\mathcal N(\eta,\sigma^2)$,
\begin{equation}
\label{eq:gram}
\FIM_\phi = \sigma^{-2}\,\E[\phi(X)\phi(X)^\top].
\end{equation}
In particular, $\FIM_\phi$ depends only on the basis and the input
distribution~$P_X$, not on the learned coefficients~$\theta$.
\end{proposition}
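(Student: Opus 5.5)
The plan is to go through the general conditional Fisher identity for a scalar predictor and then specialize it to the Gaussian case. For the conditional model $p_\theta(y\mid x)=q_{\eta_\theta(x)}(y)$, the chain rule gives the parameter score
\[
\nabla_\theta \log p_\theta(y\mid x) = s(y;\eta_\theta(x))\,J_\theta(x).
\]
The Fisher matrix is $\FIM=\E_{X\sim P_X}\E_{Y\sim q_{\eta_\theta(X)}}[s^2 J J^\top]$. Conditioning on $X$ and pulling $J_\theta(X)$ out of the inner expectation yields the predictor-Jacobian form
\[
\FIM=\E[w(\eta_\theta(X))\,J_\theta(X)J_\theta(X)^\top].
\]
This step needs regularity of $q_\eta$, which is assumed: the scalar score is square integrable, so $w(\eta)<\infty$. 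It also needs $\E\|J_\theta(X)\|^2<\infty$ so that the outer expectation is finite.

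Next I would specialize this identity. For $q_\eta=\mathcal N(\eta,\sigma^2)$ the score is $s(y;\eta)=(y-\eta)/\sigma^2$. Taking the expectation of its square gives $w(\eta)=\sigma^{-2}$, a constant that does not depend on $\eta$. For the fixed-basis coefficient model $\eta_\theta(x)=\theta^\top\phi(x)$, the model is linear in $\theta$ (as noted after \eqref{eq:kan}), so $J_\theta(x)=\phi(x)$. Substituting both facts into the predictor-Jacobian form gives \eqref{eq:gram} directly.

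The independence claim follows from the structure of the result. The right-hand side of \eqref{eq:gram} involves only $\phi$, which is determined by the fixed dictionary $\{b_m\}$, and the law $P_X$. The parameter $\theta$ entered the general formula in only two places. One is through $w(\eta_\theta(X))$, which is constant in the Gaussian case. The other is through $J_\theta$, which is $\theta$-free because the model is linear.

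The only point needing care is integrability: we need $\E[\phi(X)\phi(X)^\top]$ to be finite. B-spline atoms are bounded, so they cause no difficulty. The SiLU atom grows linearly, so I would assume, or note as implicit, that $\E[X_p^2]<\infty$ for every coordinate $p$ in which the SiLU atom is used. This is the main, if mild, obstacle. I would state it as a standing moment assumption rather than prove anything further.
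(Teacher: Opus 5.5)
Your proof is correct and is essentially the paper's argument. The paper computes the Gaussian score $\sigma^{-2}(Y-\theta^\top\phi(X))\phi(X)$ directly and uses $\E[(Y-\theta^\top\phi(X))^2\mid X]=\sigma^2$, which is exactly your $w\,JJ^\top$ identity (the paper's Proposition~\ref{prop:induced-fisher}) specialized to $w\equiv\sigma^{-2}$ and $J_\theta=\phi$; your moment condition for the linearly growing SiLU atom is a legitimate integrability point that the paper leaves implicit.
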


\begin{proof}
The conditional score is
\[
\nabla_\theta\log p_\theta(Y\mid X)
=\sigma^{-2}(Y-\theta^\top\phi(X))\phi(X).
\]
Since
\(\E[(Y-\theta^\top\phi(X))^2\mid X]=\sigma^2\),
averaging the score outer product gives~\eqref{eq:gram}.  The formula contains
\(\phi\) and \(P_X\), but not~\(\theta\).
\end{proof}

\medskip

To extend this further,
we need a Fisher
identity that does not assume a linear coefficient model. 

\begin{proposition}[Predictor Jacobian Fisher identity]
\label{prop:induced-fisher}
For the conditional model above, the Fisher matrix is
\begin{equation}
\label{eq:induced-fisher}
\FIM(\theta)
=\E_X\!\left[
  w(\eta_\theta(X))J_\theta(X)J_\theta(X)^\top
\right].
\end{equation}
\end{proposition}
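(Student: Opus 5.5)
The plan is to use the chain rule on the log-likelihood and then condition on $X$, as in the proof of Proposition~\ref{prop:gram}. For the conditional model $p_\theta(y\mid x)=q_{\eta_\theta(x)}(y)$, the parameter $\theta$ enters the likelihood only through the scalar $\eta_\theta(x)$. Hence the conditional score factors as
\[
\nabla_\theta\log p_\theta(Y\mid X)=s\bigl(Y;\eta_\theta(X)\bigr)\,J_\theta(X).
\]
This requires $\theta\mapsto\eta_\theta(x)$ to be differentiable for $P_X$-almost every $x$, and $\eta\mapsto\log q_\eta(y)$ to be differentiable, which is part of the regularity assumed for $q_\eta$.

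Next I form the Fisher matrix as the expected outer product of the score under the joint law $X\sim P_X$, $Y\mid X\sim p_\theta(\cdot\mid X)$:
\[
\FIM(\theta)=\E\!\left[s(Y;\eta_\theta(X))^2\,J_\theta(X)J_\theta(X)^\top\right].
\]
By the tower property, I condition on $X$ first. Given $X$, the matrix $J_\theta(X)J_\theta(X)^\top$ is fixed, and $Y\sim q_{\eta_\theta(X)}$. The inner conditional expectation is therefore $\E_{Y\sim q_\eta}[s(Y;\eta)^2]$ evaluated at $\eta=\eta_\theta(X)$, which by definition is $w(\eta_\theta(X))$. Taking the outer expectation over $X$ gives~\eqref{eq:induced-fisher}.

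As a consistency check, the Gaussian case $w\equiv\sigma^{-2}$ with $J_\theta=\phi$ recovers Proposition~\ref{prop:gram}.

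The main obstacle is justifying the measure-theoretic steps rather than the algebra. First, the factorization $s(Y;\eta)^2\,JJ^\top$ must be integrable, so that the tower property applies to the matrix entries. I would simply assume this: it amounts to finiteness of $\FIM(\theta)$, i.e.\ $\E_X[w(\eta_\theta(X))\|J_\theta(X)\|^2]<\infty$. Second, for multilayer KANs with spline atoms or piecewise structure, $\eta_\theta$ may fail to be differentiable on a $P_X$-null set. I would handle this by requiring differentiability $P_X$-almost everywhere, which suffices because the identity is an expectation.

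No zero-mean-score property is needed, since the statement defines the Fisher matrix as the score second moment rather than as a covariance.
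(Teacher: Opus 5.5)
Your proposal is correct and follows the paper's proof essentially step for step. Both factor the score by the chain rule as $s(Y;\eta_\theta(X))J_\theta(X)$ and condition on $X$ so that the inner expectation becomes $w(\eta_\theta(X))$. Both then average over $X$; your integrability and almost-everywhere differentiability remarks are sensible additions that the paper leaves implicit.
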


\begin{proof}
Since $P_X$ is independent of~$\theta$, the score of the conditional model is
\[
\nabla_\theta\log p_\theta(Y\mid X)
=s(Y;\eta_\theta(X))J_\theta(X)
\]
by the chain rule.  Conditioning on $X=x$,
\begin{align*}
&\E_{Y\sim q_{\eta_\theta(x)}}\!\left[
  \nabla_\theta\log p_\theta(Y\mid x)
  \nabla_\theta\log p_\theta(Y\mid x)^\top
\right] \\
&\qquad =
\E_{Y\sim q_{\eta_\theta(x)}}\!\left[
  s(Y;\eta_\theta(x))^2
\right]
J_\theta(x)J_\theta(x)^\top \\
&\qquad =
w(\eta_\theta(x))J_\theta(x)J_\theta(x)^\top .
\end{align*}
Averaging over~$X$ gives~\eqref{eq:induced-fisher}.
\end{proof}

\medskip

Thus the Fisher matrix is the observation-model information metric expressed in
parameter coordinates through the predictor's Jacobian.  Learned parameters can
enter only through $J_\theta$ or through the observation
curvature~$w(\eta_\theta)$.  In the fixed-basis Gaussian case,
$J_\theta=\phi$ and $w$ is constant, which is why
Proposition~\ref{prop:gram} is coefficient-independent; with nonconstant
observation curvature, the same identity gives a weighted Gram whose weights may
depend on~$\theta$.

\subsection{Multilayer}\label{subsec:fixed-fisher-geom-multi}

A multilayer fixed-basis KAN extends~\eqref{eq:kan} by stacking $L$ layers of
fixed-dictionary edge functions (Fig.~\ref{fig:fisher-null-events}), with
layer widths $n_0=n,n_1,\ldots,n_L=1$.  Let $h^{(0)}(x)=x$ and,
for layer~$\ell$,
\[
\begin{aligned}
h_q^{(\ell)}(x)
&=\sum_{p=1}^{n_{\ell-1}}\sum_{m=1}^{d}
\theta_{q,p,m}^{(\ell)}\,
b_m\!\left(h_p^{(\ell-1)}(x)\right),\\
&\hspace{2.5em} q=1,\ldots,n_\ell .
\end{aligned}
\]
The scalar output is $\eta_\theta(x)=h_1^{(L)}(x)$, up to any final readout.  A
hidden-edge block differs from the one-layer Gram identity only through the
distribution seen by the edge: upstream activations supply the edge input, and
the downstream derivative reweights it.  This gives the edge exposure criterion
used below.

\bigskip

Fix one scalar fixed-dictionary edge~$e$ in a trained network.  Let $H_e(X)$ be the scalar
input reaching that edge, and let
\(b_1,\ldots,b_G\) be the fixed dictionary atoms on this edge, with
\(b_e(z)=(b_1(z),\ldots,b_G(z))^\top\).  Indices \(i,j\) label edge-basis atoms
and \(e\) labels the selected edge.  The edge output is
\[
u_e(X)=\sum_{i=1}^G\theta_i b_i(H_e(X))
\]
with coefficients
$\theta_e=(\theta_1,\ldots,\theta_G)$; all other network parameters are held
fixed while forming the edge block.  Let
\[
W_e(X)=\frac{\partial\eta_\theta(X)}{\partial u_e(X)}
\]
be the downstream derivative of the final scalar prediction with respect to this
edge output.  The downstream-weighted exposure is
\begin{equation}
\label{eq:edge-weight}
R_e(X)=w(\eta_\theta(X))W_e(X)^2 .
\end{equation}
The factor $W_e$ is the downstream transmission term: it is nonzero when
perturbations of this edge can change the final prediction.
This weight defines the unnormalized effective edge distribution
\begin{equation}
\label{eq:hidden-edge-distribution}
\nu_e(B)=\E\!\left[
  R_e(X)\mathbf{1}\{H_e(X)\in B\}
\right]
\end{equation}
for an edge-input region~$B$.  Let $D_{\mathrm{full}}$ be the total number of
network parameters, let $P_e:\R^G\to\R^{D_{\mathrm{full}}}$ embed an
edge-coefficient vector into that full parameter space, and let
\[
\FIM_{\mathrm{full}}
=\E\!\left[
  w(\eta_\theta(X))\nabla_\theta\eta_\theta(X)
  \nabla_\theta\eta_\theta(X)^\top
\right]
\]
be the Fisher matrix for all network parameters.  The effective exposure of the
$i$th basis atom is defined using
$\supp(b_i)=\overline{\{z:b_i(z)\ne0\}}$, the support of~$b_i$:
\begin{equation}
\label{eq:weighted-occupancy}
\begin{aligned}
\rho_i^{(e)}
  &= \E\!\left[
      w(\eta_\theta(X))W_e(X)^2
      \mathbf{1}\{H_e(X)\in\supp(b_i)\}
    \right] \\
  &= \nu_e(\supp(b_i)).
\end{aligned}
\end{equation}
Thus $\rho_i^{(e)}=0$ means that atom~$i$ is either not reached by the edge
input or not transmitted downstream on the reached region.

\begin{proposition}[Hidden-edge Fisher as an effective-distribution Gram]
\label{prop:hidden-edge-effective-distribution}
With the notation above, the edge tangent vector is
\begin{equation}
\label{eq:edge-tangent-vector}
J_e(X)
= \nabla_{\theta_e}\eta_\theta(X)
= W_e(X)b_e(H_e(X)).
\end{equation}
The Fisher block for the edge coefficients is the Gram matrix of the fixed
edge basis under~$\nu_e$:
\begin{equation}
\label{eq:edge-block-as-gram}
\begin{aligned}
\FIM_{ij}^{(e)}
  &= \E\!\left[
      w(\eta_\theta(X))W_e(X)^2
      b_i(H_e(X))b_j(H_e(X))
    \right] \\
  &= \int b_i(z)b_j(z)\,d\nu_e(z).
\end{aligned}
\end{equation}
For any $a\in\R^G$,
\begin{equation}
\label{eq:edge-kernel}
a\in\ker\FIM^{(e)}
\iff
\sum_i a_i b_i=0\quad \nu_e\text{-a.e.}
\end{equation}
For a coordinate direction, let $e_i\in\R^G$ denote the $i$th standard basis
vector.  The exposure certificate then says that if $\rho_i^{(e)}=0$, then
$b_i=0$ $\nu_e$-a.e., so
$e_i\in\ker\FIM^{(e)}$.
These edge null directions are full-network nulls:
\[
a\in\ker\FIM^{(e)}\quad\Longrightarrow\quad
P_ea\in\ker\FIM_{\mathrm{full}} .
\]
\end{proposition}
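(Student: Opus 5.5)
The argument rests on four facts, taken in order: a chain rule for the edge tangent, Proposition~\ref{prop:induced-fisher} restricted to the edge block, an $L^2(\nu_e)$ reading of the resulting quadratic form, and compatibility of the edge block with the full Fisher matrix.

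\emph{Edge tangent.} Hold all parameters outside $\theta_e$ fixed. The edge input $H_e(X)$ is then computed upstream of edge $e$, so it does not depend on $\theta_e$. The coefficients $\theta_e$ affect $\eta_\theta$ only through the edge output $u_e$. Since $\partial u_e/\partial\theta_i=b_i(H_e(X))$, the chain rule gives $\partial\eta_\theta/\partial\theta_i=W_e(X)\,b_i(H_e(X))$, which is \eqref{eq:edge-tangent-vector}.

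\emph{Edge block as a Gram matrix.} Apply Proposition~\ref{prop:induced-fisher} to the coefficients $\theta_e$. Equivalently, take the principal submatrix of $\FIM_{\mathrm{full}}$ indexed by $\theta_e$, which is $P_e^\top\FIM_{\mathrm{full}}P_e$. Substituting the tangent gives the first line of \eqref{eq:edge-block-as-gram}. The second line is the change-of-variables identity $\E[R_e(X)g(H_e(X))]=\int g\,d\nu_e$. This holds for indicator functions $g$ by the definition \eqref{eq:hidden-edge-distribution}, and extends to simple functions and then to nonnegative measurable $g$ by monotone convergence. For $g=b_ib_j$, integrability follows from Cauchy--Schwarz once each diagonal entry is finite. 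I will state finiteness of the Fisher matrix as a standing regularity assumption.

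\emph{Kernel characterization.} This step is the only one needing care. Because the matrix is positive semidefinite, $a\in\ker\FIM^{(e)}$ holds if and only if $a^\top\FIM^{(e)}a=0$. The quadratic form equals $\int(\sum_ia_ib_i)^2\,d\nu_e$, and this integral vanishes exactly when $\sum_ia_ib_i=0$ $\nu_e$-a.e. That proves \eqref{eq:edge-kernel}.

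\emph{Exposure certificate.} By \eqref{eq:weighted-occupancy}, $\rho_i^{(e)}=0$ means $\nu_e(\supp b_i)=0$. Off its support $b_i$ vanishes identically, so $b_i=0$ $\nu_e$-a.e. Taking $a=e_i$ in \eqref{eq:edge-kernel} gives $e_i\in\ker\FIM^{(e)}$.

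\emph{Lifting to the full network.} Let $a\in\ker\FIM^{(e)}$ and set $v=P_ea$. Then $v^\top\FIM_{\mathrm{full}}v=a^\top\FIM^{(e)}a=0$. Since $\FIM_{\mathrm{full}}$ is positive semidefinite, its square root $S$ satisfies $\|Sv\|^2=0$, so $Sv=0$ and hence $\FIM_{\mathrm{full}}v=0$. I also note a direct form of the same fact: $\nabla_\theta\eta_\theta(X)^\top P_ea=W_e(X)\sum_ia_ib_i(H_e(X))$, and this vanishes $R_e$-a.s., so the score of $P_ea$ is zero in $L^2$.

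The main obstacle is the measure-theoretic bookkeeping: identifying $\nu_e$-a.e.\ statements with $R_e$-weighted almost-sure statements under $P_X$, and guaranteeing finiteness. The integrand in each step is nonnegative, so monotone convergence handles the pushforward and no further subtlety arises.
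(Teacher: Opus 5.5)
Your proof is correct and follows the paper's argument essentially step for step. You use the chain rule for the edge tangent, then Proposition~\ref{prop:induced-fisher} on $\theta_e$ together with the pushforward identity $\int f\,d\nu_e=\E[R_e(X)f(H_e(X))]$, then nonnegativity of the quadratic form for the kernel, and finally $a^\top\FIM^{(e)}a=(P_ea)^\top\FIM_{\mathrm{full}}(P_ea)$ with positive semidefiniteness for the lift. Your extra bookkeeping is sound but does not change the route: the monotone-convergence extension of the pushforward identity, the finiteness assumption, and the one-line support argument for the exposure certificate, which the paper's proof leaves implicit.
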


\begin{proof}
Apply Proposition~\ref{prop:induced-fisher} with the edge coefficients
$\theta_e$ as the parameter vector, holding all other network parameters fixed.
It remains to compute the edge Jacobian.  The chain rule gives
\begin{equation}
\label{eq:edge-tangent}
\frac{\partial\eta_\theta(X)}{\partial\theta_i}
=
\frac{\partial\eta_\theta(X)}{\partial u_e(X)}
\frac{\partial u_e(X)}{\partial\theta_i}
=
W_e(X)b_i(H_e(X)),
\end{equation}
which is \eqref{eq:edge-tangent-vector} in vector form.  Substituting this
Jacobian into~\eqref{eq:induced-fisher} gives the expectation form in
\eqref{eq:edge-block-as-gram}.  In matrix form,
\begin{equation}
\label{eq:edge-fisher-matrix}
\FIM^{(e)}
=
\E\!\left[
  R_e(X)b_e(H_e(X))b_e(H_e(X))^\top
\right].
\end{equation}
For any test function~$f$,
\[
\int f(z)\,d\nu_e(z)
=
\E\!\left[
  R_e(X)f(H_e(X))
\right].
\]
Substituting $f(z)=b_i(z)b_j(z)$ gives~\eqref{eq:edge-block-as-gram}.  For the
kernel statement,
\begin{align}
a^\top\FIM^{(e)}a
&=
\E\!\left[
  R_e(X)\left(\sum_i a_i b_i(H_e(X))\right)^2
\right] \notag\\
&=
\int\left(\sum_i a_i b_i(z)\right)^2\,d\nu_e(z).
\label{eq:edge-block-quadratic}
\end{align}
Since the integrand is nonnegative, \eqref{eq:edge-kernel} follows.  Finally,
\[
a^\top\FIM^{(e)}a=(P_ea)^\top\FIM_{\mathrm{full}}(P_ea).
\]
Because $\FIM_{\mathrm{full}}$ is positive semidefinite, a zero quadratic form
implies $\FIM_{\mathrm{full}}P_ea=0$, proving the full-network null claim.
\end{proof}

\medskip

The shallow fixed-basis result is the special case $H_e=X$, $W_e\equiv1$, and
constant observation curvature.  Thus the multilayer case preserves the same
Gram structure, but replaces ordinary input coverage by effective exposure:
the atom must be reached by the edge input and transmitted downstream.

\section{Fisher Nulls}
\label{sec:exact-nulls}

We now pin down when ``null'' is exact.  A parameter is not Fisher-null
because its fitted value is small.  It is Fisher-null when its score column is
zero on the task support.  Fixed-basis KANs reach that condition through zero
effective exposure.  ReLU MLPs reach it through dead gates or homogeneity
symmetries.  The test is the same in both cases: does an infinitesimal parameter
change reach the predictive distribution on the data that matter?

\subsection{Fixed-basis KANs}

Section~\ref{sec:fixed-fisher-geom} already gave the easy direction:
$\rho_i^{(e)}=0$ makes coordinate~$i$ invisible to the edge Fisher.  For
B-spline edges, with or without one fixed SiLU atom, the converse holds once the
edge-input measure is not degenerate on knot intervals.  Then zero-exposure
atoms are the only coordinate directions left in the edge-block kernel.

\bigskip

From Proposition~\ref{prop:hidden-edge-effective-distribution}, the score
column for coefficient~$i$ on a fixed-basis KAN edge is
\[
\frac{\partial\eta_\theta(X)}{\partial\theta_i}
=W_e(X)b_i(H_e(X)).
\]
Let $\nu_e$ denote the effective edge distribution from
\eqref{eq:hidden-edge-distribution}.  Define the effective exposure by
\[
\rho_i^{(e)}
=\E\!\left[
  w(\eta_\theta(X))W_e(X)^2
  \mathbf{1}\{H_e(X)\in\supp(b_i)\}
\right].
\]
Thus $\rho_i^{(e)}=0$ implies $e_i\in\ker\FIM^{(e)}$ by
\eqref{eq:edge-kernel}.

Fix a nondecreasing knot sequence $(\tau_j)$.  An \emph{open knot interval} is
an interval $J=(\tau_j,\tau_{j+1})$ with $\tau_j<\tau_{j+1}$; its active
dictionary restrictions are the functions $b_i|_J$ that are not identically
zero.
We say that $\nu_e$ is
\emph{nondegenerate on knot intervals} if every exposed atom is active on some
open knot interval~$J$ with $\nu_e(J)>0$, and, on every such interval, any
linear combination of the active dictionary restrictions that vanishes
$\nu_e$-a.e.\ on~$J$ vanishes identically on~$J$.
This excludes edge-input measures concentrated on too few points to distinguish
the active atoms.
Adjoining one SiLU atom preserves local linear independence: the B-spline
restrictions are polynomial on each open knot interval, whereas SiLU is not
polynomial on any open interval.  Indeed, real-analytic continuation would
otherwise make SiLU polynomial on all of~$\R$, contradicting its decay to zero
as its argument tends to~$-\infty$.
The resulting exact rank characterization is as follows.

\begin{theorem}[Exact edge rank via effective disconnects]
\label{thm:exact-edge-rank}
Assume that the active dictionary restrictions are locally linearly independent
on knot intervals~\cite{deBoor1978,schumaker2007}, and that~$\nu_e$ is nondegenerate
on knot intervals.  Then, for every edge-coefficient direction $a\in\R^G$,
\begin{equation}
\label{eq:edge-kernel-iff}
a\in\ker\FIM^{(e)}
\quad\Longleftrightarrow\quad
\supp(a)\subseteq \{i:\rho_i^{(e)}=0\}.
\end{equation}
In particular,
\begin{equation}
\label{eq:coordinate-kernel-iff}
e_i\in\ker\FIM^{(e)}
\quad\Longleftrightarrow\quad
\rho_i^{(e)}=0,
\end{equation}
and therefore
\begin{align*}
\ker\FIM^{(e)}
  &= \operatorname{span}\{e_i:\rho_i^{(e)}=0\}, \\
\rank\FIM^{(e)}
  &= \#\{i:\rho_i^{(e)}>0\}.
\end{align*}
Thus the edge block is full rank exactly when every basis atom is effectively
connected.
\end{theorem}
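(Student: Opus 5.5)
The plan is to reduce everything to the kernel characterization \eqref{eq:edge-kernel} from Proposition~\ref{prop:hidden-edge-effective-distribution}: $a\in\ker\FIM^{(e)}$ if and only if $g_a:=\sum_i a_ib_i=0$ $\nu_e$-a.e. Both directions of \eqref{eq:edge-kernel-iff} then become statements about when a dictionary combination vanishes $\nu_e$-a.e. The coordinate statement \eqref{eq:coordinate-kernel-iff} is the special case $a=e_i$. The kernel and rank formulas follow because the right-hand side of \eqref{eq:edge-kernel-iff} describes exactly the coordinate subspace $\operatorname{span}\{e_i:\rho_i^{(e)}=0\}$, whose dimension is $\#\{i:\rho_i^{(e)}=0\}$; rank--nullity then gives the rank count.

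For the direction ($\Leftarrow$), suppose every $i\in\supp(a)$ has $\rho_i^{(e)}=\nu_e(\supp(b_i))=0$. Then each such $b_i$ vanishes outside a $\nu_e$-null set. Hence $g_a$ vanishes off the finite union $\bigcup_{i\in\supp(a)}\supp(b_i)$, which is $\nu_e$-null. So $g_a=0$ $\nu_e$-a.e., and \eqref{eq:edge-kernel} gives $a\in\ker\FIM^{(e)}$. This is just the exposure certificate of Proposition~\ref{prop:hidden-edge-effective-distribution} extended by linearity.

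For the direction ($\Rightarrow$), suppose $g_a=0$ $\nu_e$-a.e. and fix $i$ with $\rho_i^{(e)}>0$; the goal is to show $a_i=0$. By nondegeneracy, the exposed atom $b_i$ is active on some open knot interval $J$ with $\nu_e(J)>0$. On $J$, the restriction $g_a|_J$ is a linear combination of the active restrictions $b_k|_J$, since inactive atoms contribute identically zero there. This combination vanishes $\nu_e$-a.e. on $J$. The second clause of nondegeneracy then forces $g_a|_J\equiv0$, and local linear independence of the active restrictions on $J$ forces every coefficient $a_k$ with $b_k|_J\not\equiv0$ to vanish, in particular $a_i=0$. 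Hence $\supp(a)\subseteq\{i:\rho_i^{(e)}=0\}$.

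The main obstacle is this converse step, specifically making sure the interval $J$ supplied by nondegeneracy is one on which the $\nu_e$-a.e. vanishing transfers to identical vanishing while $b_i$ stays active. One subtlety is that the SiLU atom has no compact support, so its support is all of the real line. Its exposure is therefore positive as soon as $\nu_e$ is nonzero, and the argument must use that SiLU restricted to any $J$ is not identically zero and is linearly independent of the polynomial B-spline pieces, as the paper argues just before the theorem. I would state explicitly that the hypothesis ``every exposed atom is active on some $J$ with $\nu_e(J)>0$'' is what links positive exposure, which is measured on the closed support $\supp(b_i)$ and could in principle sit only on knot points, to a positive-mass open interval; this is the point to check carefully.
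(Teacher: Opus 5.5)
Your proposal is correct and follows the paper's proof essentially step for step. Sufficiency holds because each zero-exposure atom vanishes $\nu_e$-a.e. For necessity, the first clause of nondegeneracy supplies an active open knot interval $J$ with $\nu_e(J)>0$, the second clause upgrades a.e.\ vanishing of $g_a$ to identical vanishing on $J$, and local linear independence then forces $a_i=0$; rank--nullity gives the rank count, and the paper bridges positive exposure to a positive-mass interval through that same first clause, as you flagged.
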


Here is the main step in the converse.  If $a\in\ker\FIM^{(e)}$, then
$\sum_i a_i b_i=0$ $\nu_e$-almost everywhere by \eqref{eq:edge-kernel}.  On an
exposed atom, nondegeneracy turns that almost-sure equality into an identity on
an active knot interval.  Local linear independence then forces the coefficient
to vanish.  The reverse implication is immediate because every zero-exposure
atom vanishes $\nu_e$-almost everywhere.  We give the full proof and the
excluded pathologies in Appendix~\ref{app:exact-edge-rank}.  The hypotheses are
doing real work: a measure on $M<G$ points can expose all $G$ atoms while the
Gram rank is at most~$M$, and adding a constant atom to a partition-of-unity
spline dictionary creates a structural null when the effective measure is
supported on the dictionary's basic interval.

\medskip

The theorem is only about exact nullity.  Small positive exposure is still
positive Fisher curvature; it is not a rank-free deletion certificate.

\subsection{ReLU MLPs}

For ReLU MLPs, exact Fisher nulls come from unit inactivity or parameter
symmetry, not from a small weight by itself.  A dead hidden unit zeros both the
activation and the local derivative on the task support, so the associated
score directions vanish in any hidden layer.  Figure~\ref{fig:fisher-null-events}
shows the mechanism inside a multilayer MLP; for the local algebra, it suffices
to write the scalar-output one-hidden-layer model as
$f(x)=\sum_k v_k r(w_k^\top x+b_k)$, where $r(z)=\max(0,z)$,
$w_k\in\R^n$ and $b_k$ are the incoming weight and bias, and $v_k$ is the
outgoing weight of unit~$k$.

\begin{proposition}[ReLU mechanisms create exact null directions]
\label{prop:dead-neuron}
For this ReLU MLP:
\begin{enumerate}
\item If hidden unit~$j$ satisfies $w_j^\top X+b_j\leq0$ almost surely and
$\Prob\{w_j^\top X+b_j=0\}=0$, then the score with respect to all $n+2$
parameters of unit~$j$ vanishes almost surely.
\item If an active unit satisfies $P(w_j^\top X + b_j = 0) = 0$, then the
rescaling $(w_j,b_j,v_j)\mapsto(cw_j,cb_j,v_j/c)$ for $c>0$ preserves the
function, and the tangent vector $\eta_j=(w_j,b_j,-v_j)$ lies in
$\ker(\FIM)$.
\end{enumerate}
\end{proposition}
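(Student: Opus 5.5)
Both parts go through Proposition~\ref{prop:induced-fisher}: the Fisher matrix is $\E_X[w(\eta_\theta(X))J_\theta(X)J_\theta(X)^\top]$, so a direction $a$ is in $\ker\FIM$ as soon as $a^\top J_\theta(X)=0$ almost surely. This holds because $a^\top\FIM a=\E[w\,(a^\top J_\theta)^2]$ and $\FIM$ is positive semidefinite. The whole proof therefore reduces to computing the score columns of unit~$j$. For $f(x)=\sum_k v_k r(w_k^\top x+b_k)$ these are
\[
\partial_{v_j}f=r(z_j),\qquad \partial_{w_j}f=v_j r'(z_j)x,\qquad \partial_{b_j}f=v_j r'(z_j),
\]
with $z_j=w_j^\top x+b_j$. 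These formulas hold wherever $z_j\neq0$, which is where $r$ is differentiable. The hypothesis $\Prob\{z_j(X)=0\}=0$ in both parts guarantees that this set has full $P_X$-measure, so the Jacobian is defined almost surely. I would state this once at the start, since it is the only place where the nondifferentiability of ReLU could cause trouble.

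For part~1, almost surely $z_j(X)<0$, because $z_j\le0$ a.s.\ and $z_j=0$ has probability zero. On that event $r(z_j)=0$ and $r'(z_j)=0$, so all $n+2$ columns listed above vanish. The score is $s(Y;\eta)J_\theta(X)$ by the chain rule, as in the proof of Proposition~\ref{prop:induced-fisher}, so it vanishes almost surely in those coordinates. As a consequence, every direction supported on unit~$j$'s parameters is in $\ker\FIM$.

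For part~2, I would use positive homogeneity, $r(cz)=c\,r(z)$ for $c>0$. This gives $v_j c^{-1} r(c z_j)=v_j r(z_j)$ for every $x$, so the function is invariant along the curve $c\mapsto(cw_j,cb_j,v_j/c)$. Differentiating at $c=1$ gives the tangent $(w_j,b_j,-v_j)$. To make this rigorous I would not differentiate the curve at the kink. Instead I would verify the identity directly at each $x$ with $z_j\neq0$:
\[
w_j^\top(v_jr'(z_j)x)+b_j v_j r'(z_j)-v_j r(z_j)=v_j\big(r'(z_j)z_j-r(z_j)\big)=0,
\]
which is Euler's identity for the homogeneous function $r$. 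Hence $\eta_j^\top J_\theta(X)=0$ almost surely, and $\eta_j\in\ker\FIM$ by the quadratic-form argument above.

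The only real obstacle is the measure-zero kink set. I expect it to be fully handled by the stated hypothesis $\Prob\{z_j=0\}=0$, together with the convention that the score is defined almost surely. The hypothesis that the unit is ``active'' is not actually needed for the kernel claim in part~2; it only ensures that $\eta_j$ is a nontrivial symmetry direction rather than one already covered by part~1.
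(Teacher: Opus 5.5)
Your proposal is correct and follows the paper's proof. Part~1 is the same direct computation: $r(z_j)=r'(z_j)=0$ on the almost-sure event $z_j<0$, which kills all $n+2$ columns. Part~2 uses the same positive-homogeneity argument: your pointwise Euler identity $r'(z)z=r(z)$ is just the explicit form of the paper's step of differentiating $v_jc^{-1}r(cz_j)=v_jr(z_j)$ at $c=1$. Your extra remarks are accurate refinements, not a different route: the quadratic-form step from an almost surely vanishing score to kernel membership, and the observation that activity of the unit is not needed for the kernel claim.
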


\begin{proof}
With $w_j^\top X+b_j<0$ almost surely,
\begin{align*}
\frac{\partial f}{\partial w_{j,i}}
&= v_jr'(\cdot)x_i = 0,\\
\frac{\partial f}{\partial b_j}
&= v_jr'(\cdot) = 0,\\
\frac{\partial f}{\partial v_j}
&= r(\cdot) = 0 .
\end{align*}
All $n+2$ score directions vanish.

Positive homogeneity gives, for $c>0$,
\[
\frac{v_j}{c}r\!\left(c(w_j^\top x + b_j)\right)
=
v_jr(w_j^\top x + b_j).
\]
Differentiating at $c=1$ gives
\[
w_j^\top\nabla_{w_j}f
+ b_j\nabla_{b_j}f
- v_j\nabla_{v_j}f
=0
\quad P_X\text{-a.s.}
\]
The statistical-model score in direction $\eta_j$ therefore vanishes almost
surely, so $\eta_j\in\ker(\FIM)$.  The nonredundant coefficient
parameterization studied here has no separate scale coordinate and hence no
analogous scale-coefficient symmetry.  If an implementation parameterizes a
spline term as $\alpha_e\sum_i\theta_i b_i$ with both $\alpha_e$ and $\theta$
trainable, it has the rescaling null
$(\alpha_e,\theta)\mapsto(c\alpha_e,\theta/c)$.
\end{proof}

\medskip

Dead or redundant ReLU units create positive-dimensional parameter sets with
zero Kullback-Leibler (KL) divergence, so their rank loss is a singular
phenomenon.  Proposition~\ref{prop:dead-neuron} is the MLP counterpart of the
KAN zero-exposure condition in Theorem~\ref{thm:exact-edge-rank}.  Unit
inactivity sets the relevant score multiplier to zero on the task support; zero
effective exposure removes a KAN basis direction from the edge Fisher.  In both
architectures, Fisher rank loss is defined by a vanishing score direction.
Parameter magnitude is not the defining criterion.
Indeed, zero parameters can create other null directions indirectly in either
architecture by cutting upstream coverage or downstream transmission.  For
example, a zero MLP outgoing weight nulls that unit's incoming-weight and bias
directions, while a downstream KAN coefficient pattern whose derivative
vanishes can null directions on an upstream edge.  Thus the contrast is between
path closure and magnitude alone, not between zero parameters in KANs and MLPs.

\FloatBarrier
\section{Consequences for KAN Pruning}
\label{sec:pruning}

We ask when a fixed-basis KAN atom can be removed without paying
Fisher rank.  The exact answer is the zero-exposure case.  If an atom has zero
exposure on the current edge block, deleting its coordinate does not reduce that
block's rank.  A positive-exposure atom is different: deleting it changes the
retained dictionary, so the smaller model has to be scored on its own.  The
numerical check below asks a separate question.  Once exposure is known, does
the fitted coefficient magnitude still predict the FIM effect of a KAN basis
atom?
The code, fixed seeds, and archived per-seed outputs for these numerical checks
are available at \url{https://github.com/atavory/fisher_kan} (artifact commit
\texttt{5ebd51d}).

\bigskip

Recall from Section~\ref{sec:fixed-fisher-geom} that, for a hidden KAN edge,
Proposition~\ref{prop:hidden-edge-effective-distribution} rewrites the
coefficient score as
\[
\frac{\partial\eta_\theta(X)}{\partial\theta_i}
=W_e(X)b_i(H_e(X)).
\]
This motivates the effective exposure
\[
\rho_i^{(e)}
=\E\!\left[w(\eta_\theta(X))\,W_e(X)^2\,
   \mathbf{1}\{H_e(X)\in\supp(b_i)\}\right].
\]
This is the weighted task mass on which atom~$i$ is reached by the edge input
and still transmitted downstream.  Section~\ref{sec:exact-nulls} showed that
$\rho_i^{(e)}=0$ certifies an exact edge-block null.  Under local dictionary
independence and effective-measure nondegeneracy on knot intervals,
Theorem~\ref{thm:exact-edge-rank} gives the converse.  For pruning, then,
$\rho_i^{(e)}$ is a preselection score for rank-free candidates before we fit
or score a retained dictionary.
For normalized B-spline atoms, $0\leq b_i\leq1$, so
\begin{equation}
\label{eq:curvature-exposure-envelope}
\FIM_{ii}^{(e)}
=\E\!\left[w(\eta_\theta(X))W_e(X)^2b_i(H_e(X))^2\right]
\leq \rho_i^{(e)}.
\end{equation}
Exposure is therefore a support-level upper envelope for coordinate curvature.
The zero set is the exact certificate.  A positive value is weaker: it says the
coordinate is present in the local statistical model, but it does not determine
the edge spectrum.

\bigskip

Pruning changes the dictionary, so the original Fisher block is not the whole
story.  After a deletion, the relevant Fisher matrix is the one for the
retained basis.  We state this matrix first because the scores below use it.

\begin{proposition}[Restricted Fisher for a pruned basis]
\label{prop:pruned-fisher}
In the fixed-variance Gaussian model of Proposition~\ref{prop:gram}, suppose a
set $R\subseteq\{1,\ldots,D\}$ of coefficients is retained and all other basis
directions are removed from the model class.  Then the Fisher matrix of the
pruned model is
\begin{equation}
\label{eq:restricted-fisher}
\FIM_R
= \sigma^{-2}\E[\phi_R(X)\phi_R(X)^\top]
= P_R^\top \FIM_\phi P_R,
\end{equation}
where $\phi_R=P_R^\top\phi$ and $P_R$ embeds the retained coordinates into the
original parameter space.
\end{proposition}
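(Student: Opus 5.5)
The plan is to treat the pruned model as a fixed-basis coefficient model in its own right and apply Proposition~\ref{prop:gram} to it. After removing the directions outside $R$, the model class is $\eta_{\theta_R}(x)=\theta_R^\top\phi_R(x)$ with $\theta_R\in\R^{|R|}$ and feature map $\phi_R=P_R^\top\phi$. Here $P_R\in\R^{D\times|R|}$ is the coordinate embedding, whose columns are the standard basis vectors $e_r$, $r\in R$. The first point to record is that $\phi_R$ is again a fixed, $\theta$-independent feature vector, namely the subvector $(\phi_r)_{r\in R}$. The pruned model is therefore linear in its parameters under the same fixed-variance Gaussian observation model, and Proposition~\ref{prop:gram} applies verbatim to give $\FIM_R=\sigma^{-2}\E[\phi_R(X)\phi_R(X)^\top]$.

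For the second equality I will substitute $\phi_R=P_R^\top\phi$ and pull the constant matrices out of the expectation by linearity:
\[
\sigma^{-2}\E[P_R^\top\phi(X)\phi(X)^\top P_R]
=P_R^\top\bigl(\sigma^{-2}\E[\phi(X)\phi(X)^\top]\bigr)P_R
=P_R^\top\FIM_\phi P_R .
\]
This is the principal submatrix of $\FIM_\phi$ indexed by $R$. As a cross-check, Proposition~\ref{prop:induced-fisher} with $J_{\theta_R}=\phi_R$ and $w\equiv\sigma^{-2}$ gives the same result.

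There is no real obstacle. The only point needing care is conceptual: the pruned Fisher is computed in the pruned model, not by restricting the full model at some parameter. In the linear Gaussian case the two coincide, because neither Fisher matrix depends on the coefficients. So it does not matter whether the deleted coefficients were zero or whether the retained ones are refit. I will state this explicitly, and I will also assume that $\E\|\phi(X)\|^2<\infty$ so that the expectations are finite.
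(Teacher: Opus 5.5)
Your proposal is correct and matches the paper's proof: apply Proposition~\ref{prop:gram} to the restricted linear model $\theta_R^\top\phi_R$, then substitute $\phi_R=P_R^\top\phi$ to obtain the principal submatrix $P_R^\top\FIM_\phi P_R$. Your cross-check via Proposition~\ref{prop:induced-fisher} with $w\equiv\sigma^{-2}$, and your remark that coefficient-independence makes refitting irrelevant, are accurate but not needed for the argument.
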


\begin{proof}
For the restricted model $f_{\theta_R}(x)=\theta_R^\top\phi_R(x)$, applying
Proposition~\ref{prop:gram} gives
$\FIM_R=\sigma^{-2}\E[\phi_R\phi_R^\top]$.  Since $\phi_R=P_R^\top\phi$, this is
$P_R^\top\FIM_\phi P_R$, the principal submatrix on the retained coordinates.
\end{proof}

\bigskip

When a deletion is not rank-free, pruning becomes a comparison among retained
dictionaries.  For a retained set~$R$ on an edge satisfying the hypotheses of
Theorem~\ref{thm:exact-edge-rank}, apply the theorem to the retained
subdictionary.  With $\FIM_R^{(e)}$ denoting its edge-block Fisher matrix,
\[
\rank \FIM_R^{(e)}
=\#\{i\in R:\rho_i^{(e)}>0\}.
\]
Zero-exposure removals are rank-free for the current edge block.  Positive
exposure removals delete directions with nonzero Fisher weight and require a
fit-complexity tradeoff.  This is still a statement about rank, not a guarantee
that the represented function is unchanged.  Pure noncoverage,
\[
\Prob\{H_e(X)\in\supp(b_i)\}=0,
\]
is lossless.  The atom and its derivative vanish on the task support, so
deleting it preserves the network function and the other exposures for any
coefficient value.  Downstream-flat zero exposure is weaker.  The atom is
reached, but $W_e=0$ on the reached region.  Deleting it is rank-free for the
current Fisher block, yet it can move the edge output outside the flat region
and change the network.  In an iterative procedure, exposures should be
recomputed after each accepted deletion because a function-changing deletion can
alter downstream derivatives and hence upstream exposures.

For a candidate retained dictionary~$R$, fix a Gaussian coefficient prior
$\theta\sim\mathcal N(0,\Sigma_0)$.  Let $N$ be the fitting-sample size and
$\widehat L_{\mathrm{val}}$ the validation negative log-likelihood.  The
restricted Gram and prior covariance are
$\Gram_R=\E[\phi_R\phi_R^\top]$ and
$\Sigma_{0,R}=P_R^\top\Sigma_0P_R$.  Define the prior-relative restricted
Fisher complexity
\[
\mathcal C_N(\Gram_R,\Sigma_{0,R})
=\frac12\log\det\!\left(
I+N\sigma^{-2}\Sigma_{0,R}^{1/2}\Gram_R\Sigma_{0,R}^{1/2}
\right).
\]
Here $I$ is the identity matrix on the retained coordinates.  We use the
Fisher-information-criterion (FIC) score
\[
S_{\mathrm{FIC}}(R)
= \widehat L_{\mathrm{val}}(R)
  + \frac{1}{N}\mathcal C_N(\Gram_R,\Sigma_{0,R}).
\]
This score keeps two quantities separate: validation loss handles approximation,
and the restricted Fisher handles complexity.  The rule we use is therefore
\[
\begin{aligned}
\rho_i^{(e)}=0
&:\quad \text{rank-free candidate in the current edge block},\\
&\quad \text{delete directly only in the noncoverage case},\\
\rho_i^{(e)}>0
&:\quad \text{nonzero-Fisher direction},\\
&\quad \text{evaluate deletion in the restricted model}.
\end{aligned}
\]

\bigskip

This leaves the empirical question for coefficient pruning: once exposure is
known, does a small fitted coefficient add information about FIM effect?
Figure~\ref{fig:exp-hidden-edge} uses a controlled fixed-basis KAN edge.
For each of 24 fixed seeds, we draw $N=40{,}000$ edge inputs from
$\operatorname{Beta}(1.4,3.0)$, evaluate $G=16$ uniformly knotted, clamped
cubic B-spline atoms and, for edge input~$h$, the fixed downstream sensitivity
$W_e(h)=0.5+|\sin(7h)|$.  We fit the edge coefficients by least squares to
$Y=\sum_i\theta_i^\star b_i(H_e)+\epsilon$, where the
$\theta_i^\star$ are independent $\mathcal N(0,1)$ draws and
$\epsilon\sim\mathcal N(0,0.15^2)$.  We compare each
fitted coefficient magnitude and effective exposure with the diagonal
edge-block FIM curvature induced by $W_e(H_e)b_i(H_e)$.  We compute Pearson
partial correlations by linear residualization across atoms using
$\log(\FIM_{ii}^{(e)}+10^{-12})$, then report their mean and a normal 95\%
interval across seeds; in the figure, $r$ denotes a Pearson correlation
coefficient.
The association between exposure and curvature is expected from
\eqref{eq:curvature-exposure-envelope}.  The coefficient-controlled comparison
asks whether fitted coefficient magnitude adds anything in this diagnostic.
All empirical inputs and targets are generated procedurally by the released
scripts; the experiments require no external dataset or accelerator.

\begin{figure}[t]
\centering
\includegraphics[width=\columnwidth]{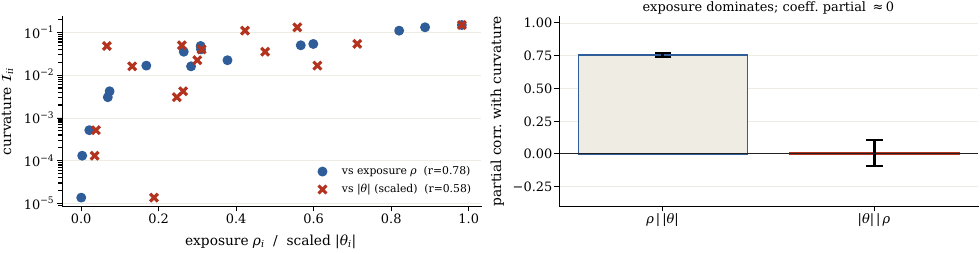}
\caption{Exposure, fitted coefficient magnitude, and edge-block curvature in
the controlled edge diagnostic.  Left: each point is one basis atom; diagonal
edge-block FIM curvature is plotted against effective exposure~$\rho_i^{(e)}$
and against fitted coefficient magnitude~$|\hat\theta_i|$ after rescaling for
comparability.  Right: Pearson partial correlation coefficients~$r$ over 24 seeds
with 95\% intervals.
Exposure remains predictive after controlling for fitted coefficient magnitude
($0.75\pm0.02$), while fitted coefficient magnitude carries no reliable
curvature information after controlling for exposure ($0.01\pm0.10$).}
\label{fig:exp-hidden-edge}
\end{figure}

Thus low-coefficient pruning is not, by itself, evidence that the deleted atom
had little FIM effect.  The effect has to be read from the exposed score column
or from the restricted Fisher of the retained dictionary.

\medskip

We also ran coefficient and disconnection controls over 24 fixed seeds, each
with $N=20{,}000$ task inputs.  For the $4\to5\to1$ ReLU network, inputs are
uniform on $[0,1]^4$, weights are Gaussian, and quantile-set biases keep all
five units active.  We compare zeroing one input weight with shifting that
unit's bias below every sampled preactivation.  For the KAN edge, inputs are
uniform on $[0,1]$, and $G=16$ clamped cubic B-spline atoms have Gaussian
coefficients.  We compare zeroing the central exposed coefficient with excluding
the first atom's support from the sampled inputs.  Each Jacobian is recomputed
after the intervention.  Numerical rank uses the cutoff
$\max(10^{-12},10^{-10}\lambda_{\max})$, where $\lambda_{\max}$ is the largest
eigenvalue; every reported rank is unchanged for relative tolerances from
$10^{-14}$ to $10^{-5}$.

\begin{table}[t]
\centering
\caption{Controlled Fisher-rank interventions.  The tested isolated parameter
zeros preserve rank, while closing the corresponding tangent paths removes the
predicted block directions.  Results are identical across 24 seeds.}
\label{tab:rank-controls}
\resizebox{\columnwidth}{!}{%
\begin{tabular}{@{}llrr@{}}
\toprule
Block & Intervention & Rank & $\Delta$ rank \\
\midrule
MLP hidden affine & Baseline & $25/25$ & n/a \\
MLP hidden affine & One active weight set to zero & $25/25$ & $0$ \\
MLP hidden affine & One ReLU made dead & $20/25$ & $-5$ \\
KAN edge & Baseline & $16/16$ & n/a \\
KAN edge & One exposed coefficient set to zero & $16/16$ & $0$ \\
KAN edge & One atom made unexposed & $15/16$ & $-1$ \\
\bottomrule
\end{tabular}%
}
\end{table}

\section{Discussion and Conclusion}
\label{sec:conclusion}

\subsection{Conclusion}
Path closure and Fisher simplicity coincide for a dead ReLU unit: the closed
activation region makes the associated parameter directions statistically
invisible on the task distribution.  A small or zero parameter value alone, in
either an MLP or a fixed-basis KAN, need not close a path or make its own score
direction Fisher-null.

Fisher simplicity in a fixed-basis KAN is graph-path based, not coefficient
based.  In the single-layer Gaussian case, the coefficient Fisher block is a
basis Gram matrix under the input distribution and does not contain the fitted
coefficients.  For a hidden edge, the effective distribution~$\nu_e$ combines
upstream coverage through $H_e$ with downstream transmission through
$R_e=w(\eta_\theta)W_e^2$.  Under the assumptions of
Theorem~\ref{thm:exact-edge-rank}, zero effective exposure exactly identifies
edge-block Fisher null directions.

The pruning consequence follows directly.  Zero-exposure atoms are rank-free
candidates, with lossless deletion guaranteed in the pure noncoverage case.
Positive-exposure deletions change the restricted Fisher and must be judged in
the retained model.  Coefficient magnitude alone is not a Fisher-based pruning
criterion.  Our controlled experiments diagnose this local Fisher geometry;
they do not establish improved end-to-end pruning performance.

\subsection{Related Work}
KANs were introduced as networks whose edges carry learned univariate
functions~\cite{liu2024kan}, motivated by the Kolmogorov-Arnold
representation theorem~\cite{kolmogorov1957,arnold1957}.  Their interpretability
program uses visualized edge functions, sparsification and pruning, and symbolic
formula extraction; KAN 2.0 broadens it to feature attribution, modular
structure, and scientific discovery~\cite{liu2024kan,liu2025kan2}.  Their
standard implementation uses spline dictionaries, so classical spline
approximation theory~\cite{deBoor1978,schumaker2007} supplies the fixed-basis
language used here.  The spline input is the classical local-independence and
Schoenberg-Whitney collocation lineage; the KAN-specific step is to move that
language to the effective edge distribution~$\nu_e$.  Recent KAN
generalization work studies capacity through standard complexity
measures~\cite{zhang2024}.  Our focus is the local Fisher geometry behind such
measures.  Other KAN analyses study expressiveness and spectral bias
~\cite{wang2025spectral} or relate piecewise-linear KANs to ReLU
networks~\cite{schoots2025relu}; these results are complementary to the
invariant versus representation-dependent distinction studied here.  The
finite-sample complexity term follows the spectrum-based regret framework of
Feder, Urbanke, and Fogel~\cite{feder2025framework}, with roots in stochastic
complexity and effective parameter counting~\cite{rissanen1996,mackay1992}.
Fisher information defines the local Riemannian metric of the statistical
manifold~\cite{amari1998}, and Fisher spectra of neural networks have been
studied empirically and theoretically~\cite{karakida2019,sagun2017}.  The
rank-loss side connects to singular learning theory~\cite{watanabe2009}, dead
and inactive-unit pruning~\cite{hu2016trimming,lu2020dying}, rescaling
symmetries and sharpness pathologies of ReLU networks
\cite{dinh2017,neyshabur2015}, and the contrast between fixed-feature kernel
regimes~\cite{jacot2018,chizat2019} and feature-learning models.

\subsection{Limitations and Open Questions}
The scope is the nonredundant fixed-dictionary coefficient parameterization: a
fixed B-spline grid, optionally with one SiLU atom.  If an implementation adds a
trainable spline-scale multiplier, it also adds a scale-coefficient rescaling
null that the coefficient block alone does not capture.  Adaptive-grid and
learned-basis KANs make the basis parameter-dependent, so the
coefficient-independent Gram of Proposition~\ref{prop:gram} no longer applies
without recomputing the Fisher identity.  Depth adds another dependence.  Even
with fixed basis atoms, $\nu_e$ changes with learned upstream activations and
downstream derivatives; coefficient-independence should therefore be read
edge-locally and at the current network.  Pruning is also not a complete
definition of complexity.  Restricting the Fisher presumes that a sub-model has
already been chosen, rank is generically full, and covariant log-det complexity
still depends on a prior.  The invariant versus representation-dependent split
is a constraint on any representation-aware complexity definition, not the
definition itself.

\appendices
\section{Exact B-spline edge rank}
\label{app:exact-edge-rank}
\label{app:sharpness}

For a fixed B-spline edge, effective exposure gives an exact rank
characterization once two pathologies are excluded.  First, the effective
distribution must not collapse onto isolated roots of a local spline
polynomial.  Second, the dictionary itself must not contain structural linear
dependencies.  We prove Theorem~\ref{thm:exact-edge-rank} and then show why
its hypotheses are not cosmetic.

\begin{proof}[Proof of sufficiency in Theorem~\ref{thm:exact-edge-rank}]
Let $g_a(z)=\sum_i a_i b_i(z)$.  The quadratic form of the edge block is
\begin{align}
\label{eq:appendix-edge-kernel-form}
a^\top\FIM^{(e)}a
  &= \int g_a(z)^2\,d\nu_e(z),\\
a\in\ker\FIM^{(e)}
  &\Longleftrightarrow
  g_a=0\quad \nu_e\text{-a.e.} \notag
\end{align}
The equivalence uses nonnegativity of the integrand.
Assume $\supp(a)\subseteq\{i:\rho_i^{(e)}=0\}$.  For each coordinate in the
support of~$a$, $\rho_i^{(e)}=0$ implies $b_i=0$ $\nu_e$-a.e.  Hence
\begin{align*}
g_a(z)
  &= \sum_{i:\rho_i^{(e)}=0} a_i b_i(z)\\
  &=0\quad \nu_e\text{-a.e.},
\end{align*}
and~\eqref{eq:appendix-edge-kernel-form} gives $a\in\ker\FIM^{(e)}$.
\end{proof}

The converse is the essential step.  Positive exposure alone does not rule out
coupled cancellations among exposed atoms, and the dictionary itself can
introduce data-invariant nulls.  The next two examples show why the hypotheses
in Theorem~\ref{thm:exact-edge-rank} are not cosmetic.

\begin{example}[Discrete state collapse]
\label{ex:discrete-state-collapse}
Suppose the effective distribution is carried by $M$ points
$z_1,\dots,z_M$.  Write the collocation matrix
$B=[\,b_i(z_m)\,]\in\R^{M\times G}$ and
$D=\diag\!\big(\nu_e(\{z_m\})\big)$.  By~\eqref{eq:edge-fisher-matrix},
\begin{align}
\label{eq:atomic-gram}
\FIM^{(e)}
  &= \sum_{m=1}^{M}\nu_e(\{z_m\})\,b_e(z_m)b_e(z_m)^\top \notag\\
  &= B^\top D B,\\
\rank\FIM^{(e)}
  &= \rank(B^\top D B)
   \leq \rank B
   \leq M . \notag
\end{align}
Thus every vector in $\ker B$ is an edge-Fisher null:
\begin{align*}
Ba=0
&\quad\Longrightarrow\quad
a^\top\FIM^{(e)}a
  = a^\top B^\top D B a\\
&\quad\Longrightarrow\quad
a^\top\FIM^{(e)}a=0 .
\end{align*}
A cubic atom is nonzero on four consecutive knot spans, so $M$ points can meet
all $G$ supports even when $M<G$.  Choosing the points in span interiors gives
\begin{align*}
\rho_i^{(e)}
  &= \nu_e(\supp b_i)>0
  \qquad \text{for all } i,\\
\dim\ker\FIM^{(e)}
  &= G-\rank\FIM^{(e)}
   \geq G-M
   >0 .
\end{align*}
For $G=9$ and $M=6$, all nine atoms can have positive exposure while
$\rank\FIM^{(e)}\le6$.  The missing dimensions are coupled cancellations among
exposed atoms, not coordinate directions explained by zero exposure.  Positive
occupancy alone therefore does not imply full rank; the interval
nondegeneracy condition rules out exactly this discrete-state pathology.
\end{example}

\begin{example}[A structural dictionary null]
\label{ex:structural-dictionary-null}
The second failure mode is a dictionary dependency.  B-spline atoms obey the
partition of unity $\sum_{i=1}^{G}b_i\equiv1$ on the basic interval of the knot
sequence.  If one adjoins a constant atom $b_0\equiv1$ and takes
$a=(a_0,a_1,\dots,a_G)=(1,-1,\dots,-1)$, then
$\sum_{i=0}^{G}a_i b_i=1-\sum_{i=1}^{G}b_i\equiv0$ on that interval.  Hence,
for every effective distribution~$\nu_e$ supported there,
\begin{equation}
\label{eq:pou-null}
a^\top\FIM^{(e)}a=\int\Big(\sum_{i=0}^{G}a_i b_i\Big)^{2}d\nu_e=0 ,
\end{equation}
an exact null present under every such input distribution.  This is a
structural, data-invariant null, analogous to the MLP symmetry null in
Proposition~\ref{prop:dead-neuron}, not a missing-exposure event.  Adjoining one
SiLU atom does not create such a dependency: on each open knot interval the
spline restrictions are polynomial, whereas SiLU is not polynomial.  Thus the
augmented fixed dictionary remains locally linearly independent, and
Theorem~\ref{thm:exact-edge-rank} applies under its effective-measure
nondegeneracy condition.
\end{example}

\begin{proof}[Proof of necessity in Theorem~\ref{thm:exact-edge-rank}]
Assume $a\in\ker\FIM^{(e)}$ and fix any exposed atom $i$ with
$\rho_i^{(e)}>0$.  Let $g_a(z)=\sum_i a_i b_i(z)$.  By~\eqref{eq:edge-kernel},
$g_a=0$ $\nu_e$-a.e.; equivalently,
\begin{align}
0
&=a^\top\FIM^{(e)}a \notag\\
&=\int g_a(z)^2\,d\nu_e(z).
\label{eq:appendix-necessity-zero}
\end{align}
Positive exposure means that some knot interval~$J$ on which $b_i$ is active
has $\nu_e(J)>0$:
\begin{align*}
\rho_i^{(e)}>0
&\Longrightarrow
\exists J\ \text{with}\ b_i|_J\not\equiv0,\ \nu_e(J)>0 .
\end{align*}
Since~$\nu_e$ is nondegenerate on knot intervals,
\eqref{eq:appendix-necessity-zero} makes $g_a$ vanish identically on~$J$.
Therefore
\[
0=g_a|_J=\sum_{r:\,b_r|_J\not\equiv0} a_r\, b_r|_J .
\]
Local linear independence of the active dictionary restrictions forces
$a_r=0$ for every active~$r$, in particular $a_i=0$.  Since this holds for
every exposed atom, $\supp(a)\subseteq\{i:\rho_i^{(e)}=0\}$.  Together with the
sufficiency direction, this proves
\begin{align*}
\ker\FIM^{(e)}
&=\{a:a_i=0\ \text{whenever}\ \rho_i^{(e)}>0\}\\
&=\operatorname{span}\{e_i:\rho_i^{(e)}=0\}.
\end{align*}
The coordinate criterion~\eqref{eq:coordinate-kernel-iff} is the special case
$a=e_i$, and the rank formula follows by rank-nullity:
\[
\rank\FIM^{(e)}=G-\dim\ker\FIM^{(e)}
  =\#\{i:\rho_i^{(e)}>0\}.
\]
\end{proof}

\bibliographystyle{IEEEtran}
\bibliography{references}

@inproceedings{liu2024kan,
  title = {{KAN}: Kolmogorov-Arnold Networks},
  author = {Liu, Ziming and Wang, Yixuan and Vaidya, Sachin and Ruehle, Fabian and Halverson, James and Solja{\v{c}}i{\'c}, Marin and Hou, Thomas Y and Tegmark, Max},
  booktitle = {International Conference on Learning Representations},
  year = {2025}
}

@article{liu2025kan2,
  title = {Kolmogorov-Arnold Networks Meet Science},
  author = {Liu, Ziming and Tegmark, Max and Ma, Pingchuan and Matusik, Wojciech and Wang, Yixuan},
  journal = {Physical Review X},
  volume = {15},
  number = {4},
  pages = {041051},
  year = {2025},
  doi = {10.1103/4t7t-v19l}
}

@misc{feder2025framework,
  title = {Information-Theoretic Framework for Understanding Modern Machine-Learning},
  author = {Feder, Meir and Urbanke, R{\"u}diger and Fogel, Yaniv},
  year = {2025},
  eprint = {2506.07661},
  archiveprefix = {arXiv},
  note = {arXiv:2506.07661}
}

@article{kolmogorov1957,
  title = {On the Representation of Continuous Functions of Many Variables by Superposition of Continuous Functions of One Variable and Addition},
  author = {Kolmogorov, A. N.},
  journal = {Doklady Akademii Nauk SSSR},
  volume = {114},
  number = {5},
  pages = {953--956},
  year = {1957}
}

@article{arnold1957,
  title = {On Functions of Three Variables},
  author = {Arnold, V. I.},
  journal = {Doklady Akademii Nauk SSSR},
  volume = {114},
  pages = {679--681},
  year = {1957}
}

@book{deBoor1978,
  title = {A Practical Guide to Splines},
  author = {de Boor, Carl},
  publisher = {Springer},
  year = {1978}
}

@article{rissanen1996,
  title = {Fisher Information and Stochastic Complexity},
  author = {Rissanen, Jorma},
  journal = {IEEE Transactions on Information Theory},
  volume = {42},
  number = {1},
  pages = {40--47},
  year = {1996}
}

@article{mackay1992,
  title = {A Practical Bayesian Framework for Backpropagation Networks},
  author = {MacKay, David J. C.},
  journal = {Neural Computation},
  volume = {4},
  number = {3},
  pages = {448--472},
  year = {1992}
}

@book{watanabe2009,
  title = {Algebraic Geometry and Statistical Learning Theory},
  author = {Watanabe, Sumio},
  publisher = {Cambridge University Press},
  year = {2009}
}

@misc{zhang2024,
  title = {Generalization Bounds and Model Complexity for Kolmogorov-Arnold Networks},
  author = {Zhang, X. and Zhou, H.},
  year = {2024},
  eprint = {2410.08026},
  archiveprefix = {arXiv},
  note = {arXiv:2410.08026}
}

@inproceedings{wang2025spectral,
  title = {On the Expressiveness and Spectral Bias of {KANs}},
  author = {Wang, Yixuan and Siegel, Jonathan W. and Liu, Ziming and Hou, Thomas Y.},
  booktitle = {International Conference on Learning Representations},
  year = {2025}
}

@inproceedings{schoots2025relu,
  title = {Relating Piecewise Linear Kolmogorov Arnold Networks to {ReLU} Networks},
  author = {Schoots, Nandi and Villani, Mattia Jacopo and uit de Bos, Niels},
  booktitle = {Proceedings of the International Conference on Artificial Intelligence and Statistics},
  series = {Proceedings of Machine Learning Research},
  volume = {258},
  year = {2025}
}

@inproceedings{karakida2019,
  title = {Universal Statistics of Fisher Information in Deep Neural Networks: Mean Field Approach},
  author = {Karakida, Ryo and Akaho, Shotaro and Amari, Shun-ichi},
  booktitle = {Proceedings of the International Conference on Artificial Intelligence and Statistics},
  year = {2019}
}

@article{amari1998,
  title = {Natural Gradient Works Efficiently in Learning},
  author = {Amari, Shun-ichi},
  journal = {Neural Computation},
  volume = {10},
  number = {2},
  pages = {251--276},
  year = {1998}
}

@misc{sagun2017,
  title = {Empirical Analysis of the Hessian of Over-Parametrized Neural Networks},
  author = {Sagun, Levent and Evci, Utku and G{\"u}ney, V. Ugur and Dauphin, Yann N. and Bottou, L{\'e}on},
  year = {2017},
  eprint = {1706.04454},
  archiveprefix = {arXiv},
  note = {arXiv:1706.04454}
}

@book{schumaker2007,
  title = {Spline Functions: Basic Theory},
  author = {Schumaker, Larry L.},
  edition = {3},
  publisher = {Cambridge University Press},
  year = {2007}
}

@inproceedings{dinh2017,
  title = {Sharp Minima Can Generalize for Deep Nets},
  author = {Dinh, Laurent and Pascanu, Razvan and Bengio, Samy and Bengio, Yoshua},
  booktitle = {International Conference on Machine Learning},
  year = {2017}
}

@inproceedings{neyshabur2015,
  title = {Norm-Based Capacity Control in Neural Networks},
  author = {Neyshabur, Behnam and Tomioka, Ryota and Srebro, Nathan},
  booktitle = {Conference on Learning Theory},
  year = {2015}
}

@inproceedings{jacot2018,
  title = {Neural Tangent Kernel: Convergence and Generalization in Neural Networks},
  author = {Jacot, Arthur and Gabriel, Franck and Hongler, Cl{\'e}ment},
  booktitle = {Advances in Neural Information Processing Systems},
  year = {2018}
}

@inproceedings{chizat2019,
  title = {On Lazy Training in Differentiable Programming},
  author = {Chizat, L{\'e}na{\"i}c and Oyallon, Edouard and Bach, Francis},
  booktitle = {Advances in Neural Information Processing Systems},
  year = {2019}
}

@misc{hu2016trimming,
  title = {Network Trimming: A Data-Driven Neuron Pruning Approach towards Efficient Deep Architectures},
  author = {Hu, Hengyuan and Peng, Rui and Tai, Yu-Wing and Tang, Chi-Keung},
  year = {2016},
  eprint = {1607.03250},
  archiveprefix = {arXiv},
  note = {arXiv:1607.03250}
}

@article{lu2020dying,
  title = {Dying {ReLU} and Initialization: Theory and Numerical Examples},
  author = {Lu, Lu and Shin, Yeonjong and Su, Yanhui and Karniadakis, George Em},
  journal = {Communications in Computational Physics},
  volume = {28},
  number = {5},
  pages = {1671--1706},
  year = {2020},
  doi = {10.4208/cicp.OA-2020-0165}
}

\end{document}